\mathchardef\mh="2D
\newcommand{\mr}[1]{\ensuremath{ \mathrm{#1}}}
\newtheorem{theorem}{Theorem}
\newtheorem{prop}        [theorem] {Proposition}	
\newcommand{\pre}{\mathsf{pre}}     
\newcommand{\eff}{\mathsf{eff}}     
\newcommand{\cond}{\mathsf{cond}}   
\begin{document}
%
\title{Solving Multiagent Planning Problems with Concurrent Conditional Effects}
\author{Daniel Furelos-Blanco\thanks{The work was conducted while at Universitat Pompeu Fabra.}\\
Department of Computing\\
Imperial College London\\
London, SW7 2AZ, United Kingdom\\
d.furelos-blanco18@imperial.ac.uk
\And
Anders Jonsson\\
Dept.~Information and Communication Technologies\\
Universitat Pompeu Fabra\\
Roc Boronat 138, 08018 Barcelona, Spain\\
anders.jonsson@upf.edu
}
\maketitle

\begin{abstract}
	In this work we present a novel approach to solving concurrent multiagent planning problems in which several agents act in parallel. Our approach relies on a compilation from concurrent multiagent planning to classical planning, allowing us to use an off-the-shelf classical planner to solve the original multiagent problem. The solution can be directly interpreted as a concurrent plan that satisfies a given set of concurrency constraints, while avoiding the exponential blowup associated with concurrent actions. Our planner is the first to handle action effects that are conditional on what other agents are doing. Theoretically, we show that the compilation is sound and complete. Empirically, we show that our compilation can solve challenging multiagent planning problems that require concurrent actions.
\end{abstract}

\section{Introduction}
Concurrent multiagent planning is a branch of multiagent planning in which several agents {\em collaborate} to solve a given problem. Collaboration takes the form of {\em concurrent} or {\em joint} actions that are executed together by multiple agents. Concurrent multiagent planning is challenging for several reasons: the number of concurrent actions is worst-case exponential in the number of agents, and restrictions are needed to ensure that concurrent actions are well-formed. Usually, these restrictions take the form of {\em concurrency constraints}~\cite{BoutilierB01,Cr13b}, which model both the case for which two actions {\em must} occur in parallel, and for which they {\em cannot} occur in parallel.

In spite of recent progress in multiagent planning, there are relatively few multiagent planners that can reliably handle concurrency. CMAP~\cite{Borrajo13}, MAPlan~\cite{StolbaFK16} and MH-FMAP~\cite{TorrenoOS14} can all produce concurrent plans, but are not designed to handle more complex concurrency constraints. \citeauthor{CrosbyJR14}~(\citeyear{CrosbyJR14}) associate concurrency constraints with the {\em objects} of a multiagent planning problem and transform the problem into a sequential, single-agent problem that can be solved using a classical planner. \citeauthor{ShekharB18}~(\citeyear{ShekharB18}) adapt this approach using {\em collaborative actions}, i.e.~single actions that involve the minimum number of agents necessary to perform a given task. \citeauthor{BrafmanZoran14}~(\citeyear{BrafmanZoran14}) extend the distributed forward-search planner MAFS \cite{NissimB14} to support concurrency constraints while preserving privacy. \citeauthor{MaliahBS17}~(\citeyear{MaliahBS17}) propose MAFBS, which extends MAFS to use forward and backward messages.

In this paper we describe a planner that can handle arbitrary concurrency constraints, as originally proposed by~\citeauthor{BoutilierB01}~(\citeyear{BoutilierB01}) and later extended by \citeauthor{kovacs2012multi}~(\citeyear{kovacs2012multi}). Our approach is similar to previous approaches in that we transform a multiagent planning problem into a single-agent problem with much fewer actions, avoiding the exponential blowup associated with concurrent actions. The concurrency constraints of \citeauthor{BoutilierB01} are significantly more expressive than those of~\citeauthor{Cr13b}~(\citeyear{Cr13b}), enabling us to solve multiagent problems with more complex interactions (e.g.~effects that depend on the concurrent actions of other agents). We show that our planner is sound and complete, and perform experiments in several concurrent multiagent planning domains to evaluate its performance.

The remainder of this paper is structured as follows. We first introduce the planning formalisms that we need to describe our planner. Next, we describe the compilation from multiagent planning to single-agent planning. We then present the results of experiments in several domains that require concurrency. Finally, we relate our planner to existing work in the literature, and conclude with a discussion.

\section{Background}

In this section we describe the planning formalisms that we use: classical planning and concurrent multiagent planning.

\subsection{Classical Planning}

We consider the fragment of classical planning with conditional effects and negative conditions and goals. Given a fluent set $F$, a {\em literal} $l$ is a valuation of a fluent in $F$, where $l=f$ denotes that $l$ assigns true to $f\in F$, and $l=\neg f$ that $l$ assigns false to $f$. A literal set $L$ is {\em well-defined} if it does not
assign conflicting values to any fluent $f$, i.e.~does not
contain both $f$ and $\neg f$. Let $\mathcal{L}(F)$ be the set of well-defined literal sets on $F$, i.e.~the set of all partial assignments of values to fluents. Given a literal set $L\in\mathcal{L}(F)$, let $\neg L=\{\neg l:l\in L\}$ be the {\em complement} of $L$. We also define the {\em projection} $L_{|X}$ of a literal set $L$ onto a subset of fluents $X\subseteq F$.

A {\em state} $s\in\mathcal{L}(F)$ is a well-defined literal set such that $|s|=|F|$, i.e.~a total assignment of values to fluents. Explicitly including negative literals $\neg f$ in states simplifies subsequent definitions, but we often abuse notation by defining a state $s$ only in terms of the fluents that are true in $s$, as is common in classical planning.

A classical planning problem is a tuple $\Pi=\left\langle F,A,I,G \right\rangle$, where $F$ is a set of fluents, $A$ a set of actions, $I\in\mathcal{L}(F)$ an initial state, and $G\in\mathcal{L}(F)$ a goal condition (usually satisfied by multiple states). Each action $a \in A$ has a precondition $\pre(a)\in\mathcal{L}(F)$ and a set of conditional effects $\cond(a)$. Each conditional effect $C\rhd E\in\cond(a)$ has two literal sets $C\in\mathcal{L}(F)$ (the condition) and $E\in\mathcal{L}(F)$ (the effect). 

An action $a\in A$ is applicable in state $s$ if and only if $\pre(a)\subseteq s$, and the resulting (triggered) {\em effect} is given by
\[
\eff(s,a)=\bigcup_{C\rhd E\in\cond(a),C\subseteq s} E,
\]
i.e.~effects whose conditions hold in $s$. We assume that $\eff(s,a)$ is a well-defined literal set in $\mathcal{L}(F)$ for each state-action pair $(s,a)$. The result of applying $a$ in $s$ is a new state $\theta(s,a)=(s\setminus \neg\eff(s,a))\cup\eff(s,a)$. It is straightforward to show that if $s$ and $\eff(s,a)$ are in $\mathcal{L}(F)$, then so is $\theta(s,a)$.

A {\em plan} for planning problem $\Pi$ is an action sequence $\pi=\langle a_1, \ldots, a_n\rangle$ that induces a state sequence $\langle s_0, s_1, \ldots, s_n\rangle$ such that $s_0=I$ and, for each $i$ such that $1\leq i\leq n$, action $a_i$ is applicable in $s_{i-1}$ and generates the successor state $s_i=\theta(s_{i-1},a_i)$. Plan $\pi$ {\em solves} $\Pi$ if and only if $G\subseteq s_n$, i.e.~if the goal condition holds after applying $\pi$ in $I$.

\subsection{Concurrent Multiagent Planning}

The standard definition of multiagent planning problems (MAPs) is due to~\citeauthor{BrafmanD08}~(\citeyear{BrafmanD08}). Formally, a MAP is a tuple $\Pi = \langle N,F,\lbrace A^i\rbrace_{i\in N},I,G \rangle$, where $N = \left\lbrace1,\ldots,n\right\rbrace$ is a set of agents and $A^1,\ldots,A^n$ are disjoint sets of atomic actions of each agent. The fluent set $F$, initial state $I$ and goal condition $G$ are defined as for classical (single-agent) planning. The definition and semantics of a plan $\pi$ are also identical to those for classical planning, except that $\pi$ is a sequence of {\em joint actions}, which we proceed to define.

Let $A=A^1\cup\cdots\cup A^n$ be the set of all atomic actions. A joint action $a=\{a^1,\ldots, a^k\}\subseteq A$ is a subset of atomic actions such that $|A^i\cap a|\leq 1$ for each $i\in N$, i.e.~each agent contributes at most one action to $a$, implying $k\leq n$. The precondition and effect of $a$ are defined as the union of the preconditions and effects of the constituent atomic actions:
\[
\pre(a)=\bigcup_{j=1}^k \pre(a^j), \;\;\;\; \eff(s,a)=\bigcup_{j=1}^k \eff(s,a^j).
\]
A joint action $a$ is {\em well-defined} if $\pre(a)$ and $\eff(s,a)$ are well-defined literal sets in $\mathcal{L}(F)$ for each state $s$. 

In general, nothing prevents two atomic actions $a^1$ and $a^2$ of different agents from having conflicting preconditions or effects. Hence any joint action that includes both $a^1$ and $a^2$ is not well-defined. Moreover, some atomic actions may only be applicable together. For example, in the \textsc{BoxPushing} domain~\cite{BrafmanZoran14}, some boxes are too heavy to push for a single agent, and a joint action is only applicable if enough agents push a box concurrently.

To ensure that joint actions are applicable and well-defined, researchers usually impose {\em concurrency constraints} on joint actions, which can be either positive or negative:
\begin{itemize}
	\item A \textit{positive concurrency constraint} states that a subset of atomic actions \textit{must} be performed concurrently.
	\item A \textit{negative concurrency constraint} states that a subset of atomic actions \textit{cannot} be performed concurrently.
\end{itemize}

In PDDL 2.1~\cite{Fox:PDDL21:JAIR2003}, two actions $a^1$ and $a^2$ cannot be applied concurrently if $a^1$ has an effect on a fluent $f$ and $a^2$ has a precondition or effect on $f$. This concurrency constraint requires no prior knowledge apart from the action definitions, but can only model negative concurrency.

\citeauthor{Cr13b}~(\citeyear{Cr13b}) defines concurrency constraints in the form of {\em object affordances}, i.e.~integer intervals that determine how many agents can interact with an object concurrently. An object with affordance $[1,1]$ can only be manipulated by one agent, while $[2,10]$ requires manipulation by at least two and at most ten agents. This approach assumes that joint actions are well-defined whenever object affordances are satisfied. \citeauthor{CrosbyJR14}~(\citeyear{CrosbyJR14}) extend the approach to affordances on {\em object sets}.

\citeauthor{BoutilierB01}~(\citeyear{BoutilierB01}) proposed an alternative definition of concurrency constraints, later extended by \citeauthor{kovacs2012multi}~(\citeyear{kovacs2012multi}). The idea is to extend the preconditions of actions with {\em other actions} in addition to fluents. If an atomic action $a^1$ has precondition $a^2$, then $a^2$ must be applied concurrently with $a^1$, while a precondition $\neg a^2$ implies that $a^2$ cannot be applied concurrently with $a^1$. A joint action is only applicable if the concurrency constraints (i.e.~preconditions) of {\em all} constituent atomic actions hold, and applicable joint actions are assumed to be well-defined.

As a side effect of the latter approach, we can also add concurrency constraints to the conditional effects of atomic actions. We illustrate this idea using the \textsc{TableMover} domain~\cite{BoutilierB01}, in which two agents move blocks between rooms using two alternative strategies:
\begin{enumerate}
	\item Pick up blocks and carry them using their arms.
	\item Put blocks on a table, carry the table together to another room, and tip the table to make the blocks fall down.
\end{enumerate}

\begin{figure}
	\begin{footnotesize}
		\begin{lstlisting}[escapechar=ä]
(:action lift-side
 ä\textbf{:agent ?a - agent}ä
 :parameters (?s - side)
 :precondition 
   (and (at-side ?a ?s)
          (down ?s) (handempty ?a)
          ä\textbf{(forall} \textbf{(?a2 - agent ?s2 - side)}ä
            ä\textbf{(not (lower-side ?a2 ?s2)))}ä)
 :effect
   (and (not (down ?s)) (lifting ?a ?s)
          (up ?s) (not (handempty ?a ?s))
          (forall 
            (?b - block ?r - room ?s2 - side)
            (when 
              (and (inroom Table ?r)
                     (on-table ?b) (down ?s2)
                     ä\textbf{(forall (?a2 - agent)}ä
                       ä\textbf{(not (lift-side ?a2 ?s2)))}ä)
              (and (on-floor ?b) (inroom ?b ?r)
                     (not (on-table ?b)))))))
		\end{lstlisting}
	\end{footnotesize}
	\caption{Definition of the \textsc{TableMover} action $\mathsf{lift\mh side}$ using the notation of Kovacs (concurrency constraints in bold).}
	\label{fig:tablemover_liftside}
\end{figure}

Figure~\ref{fig:tablemover_liftside} shows the definition of the $\mathsf{lift\mh side}$ action in the notation of~\citeauthor{kovacs2012multi}~(\citeyear{kovacs2012multi}), which is used by agent $\mathsf{?a}$ to lift side $\mathsf{?s}$ of the table. The precondition is that the side must be down (i.e.~on the floor) and the agent cannot be holding anything. Moreover, the precondition also states that no other agent $\mathsf{?a2}$ can lower side $\mathsf{?s2}$ at the same time. When the action is applied, $\mathsf{?s}$ is no longer down but up, and $\mathsf{?a}$ is busy lifting $\mathsf{?s}$. The action also has a conditional effect (represented by the $\mathsf{when}$ clause): if some side $\mathsf{?s2}$ is not lifted by any agent $\mathsf{?a2}$, then all blocks on the table fall to the floor. This conditional effect is what makes it possible to tip the table in order to implement the second strategy above.

Note that the action $\mathsf{lift\mh side}$ is defined using $\mathsf{forall}$ quantifiers. In practice, such quantifiers are compiled away, such that the resulting actions have quantifier-free preconditions and conditional effects, as in our definition of actions.

Below we extend the notation for classical planning to incorporate the concurrency constraints of \citeauthor{BoutilierB01}~(\citeyear{BoutilierB01}). The idea is to view $A$, the full set of atomic actions, as a set of fluents that can be true or false. We can now use a set of literals on $A$ to model a joint action $a=\{a^1,\ldots, a^k\}$: the fluents in $A$ corresponding to actions $a^1,\ldots, a^k$ are true, while all other fluents in $A$ are false. Let $L(a)$ denote the literal set on $A$ that encodes $a$.

The next step is to define an extended fluent set $F\cup A$, as well as a set $\mathcal{L}(F\cup A)$ of well-defined literal sets on $F\cup A$. We can now encode a state $s$ and a joint action $a$ as an extended state $s\cup L(a)$, i.e.~a literal set in $\mathcal{L}(F\cup A)$.

To include concurrency constraints in the precondition and conditional effects of an action $a^j$, we simply define the precondition $\pre(a^j)\in\mathcal{L}(F\cup A)$ and condition $C\in\mathcal{L}(F\cup A)$ of each conditional effect $C\rhd E\in\cond(a^j)$ as well-defined literal sets on extended fluents. Each effect $E\in\mathcal{L}(F)$ is defined exclusively on fluents as before.

We can now define the semantics of a joint action $a=\{a^1,\ldots, a^k\}$. Concretely, $a$ satisfies the concurrency constraints if and only if the projected precondition $\pre(a^j)_{|A}$ holds in $L(a)$ for each $j$, $1\leq j\leq k$. If $a$ is applicable, its precondition and effect in a state $s$ are the union of the preconditions and effects of the constituent atomic actions:
\begin{align*}
\pre(a)&=\bigcup_{j=1}^k\pre(a^j)_{|F},\\
\eff(s,a)&=\bigcup_{j=1}^k\eff(s\cup L(a), a^j), \;\; \forall s.
\end{align*}
Note that the effects of atomic actions are conditional on the extended state $s\cup L(a)$.
As before, we assume that $\eff(s,a)$ is a well-defined literal set in $\mathcal{L}(F)$ for each pair of a state $s$ and an applicable joint action $a$.

We use a small example to illustrate the notation. Consider a MAP with two agents and action sets $A^1=\{a^1,a^2\}$ and $A^2=\{a^3,a^4\}$. The full set of actions is $A=A^1\cup A^2=\{a^1,a^2,a^3,a^4\}$. Assume that $a^1$ and $a^3$ are defined as
\begin{align*}
\pre(a^1) &= \{\neg a^4\}, & \pre(a^3) &= \emptyset,\\
\cond(a^1) &= \{\{\neg a^3\}\rhd\{f\}\}, & \cond(a^3) &= \{\emptyset\rhd\{g\}\}.
\end{align*}
The joint action $a=\{a^1,a^4\}$ is not applicable since the precondition $\neg a^4$ of $a^1$ does not hold in the extended state $L(a)=\{a^1,\neg a^2,\neg a^3,a^4\}$. The joint action $a'=\{a^1,a^3\}$ is applicable and results in the effect $\eff(s,a')=\{g\}$ in any state $s$. The joint action $a''=\{a^1\}$ is also applicable and results in the effect $\eff(s,a'')=\{f\}$ in any state $s$, since the condition $\neg a^3$ in the conditional effect $\{\neg a^3\}\rhd\{f\}$ of $a^1$ holds in the extended state $L(a'')=\{a^1,\neg a^2,\neg a^3,\neg a^4\}$.

\section{Compilations for MAPs}
\label{sec:compilation}

In this section we describe an approach to solving a MAP $\Pi= \langle N,F,\lbrace A^i\rbrace_{i\in N},I,G \rangle$. The idea is to model each joint action $a=\{a^1,\ldots,a^k\}$ using multiple atomic actions: one set of actions for {\em selecting} $a^1,\ldots,a^k$, one set of actions for {\em applying} $a^1,\ldots,a^k$, and one set of actions for {\em resetting} $a^1,\ldots,a^k$. The result is a classical planning problem $\Pi'=\langle F',A',I',G'\rangle$ such that the size of the action set $A'$ is {\em linear} in $|A|$, the number of atomic actions of agents.

Simulating a joint action $a$ using a sequence of atomic actions $\langle a^1,\ldots,a^k\rangle$ is problematic for the following reason: when applying an atomic action $a^i$, we may not yet know which atomic actions will be applied by other agents. Since those other actions may be part of the precondition and conditional effects of $a^i$, it becomes difficult to ensure that the concurrency constraints of $a^i$ are correctly enforced.

Our approach is to divide the simulation of a joint action $a$ into three phases: selection, application, and reset. In the selection phase, we use an auxiliary fluent $\mathsf{active\mh}a^i$ to model that the atomic action $a^i$ has been selected. In the application phase, since the selection of atomic actions is known, we can substitute each action $a^i$ in preconditions and conditional effects with the auxiliary fluent $\mathsf{active\mh}a^i$. In the reset phase, various auxiliary fluents are reset. Note that each agent can apply at most one atomic action per time step, and agents collaborate to form joint actions whose constituent atomic actions are compatible and/or inapplicable on their own. 

\subsection{Fluents}
We describe the fluents in PDDL format, i.e.~each fluent is instantiated by assigning objects to predicates.

The set of fluents $F'\supseteq F$ includes all original fluents in $F$, plus the following auxiliary fluents:
\begin{itemize}
	\item Fluents $\mathsf{free}$, $\mathsf{select}$, $\mathsf{apply}$ and $\mathsf{reset}$ modeling the phase.
	\item For each agent $i$, fluents $\mathsf{free\mh agent}(i)$, $\mathsf{busy\mh agent}(i)$ and $\mathsf{done\mh agent}(i)$ that model the agent state: free to select an action, selected an action, and applied the action.
	\item For each action $a^i\in A^i$ in the action set of agent $i$, a fluent $\mathsf{active\mh}a^i$ which models that $a^i$ has been selected. We use $F_{act}$ to denote the subset of fluents of this type.
\end{itemize}
By simple inspection, the total number of fluents in $F'$ is given by $|F'|=|F| + 4 + 3n + \sum_{i\in N}\left|A^i\right|=O(|F|+|A|)$.

 The initial state $I'$ of the compilation $\Pi'$ is given by
\[
I'=I\cup\{\mathsf{free}\}\cup\{\mathsf{free\mh agent}(i):i\in N\},
\]
i.e.~the initial state on fluents in $F$ is $I$, we are not simulating any joint action, and all agents are free to select actions. The goal condition is given by $G'=G\cup\{\mathsf{free}\}$, i.e.~the goal condition $G$ has to hold at the end of a joint action simulation.

\subsection{Actions}

For a literal set $L\in\mathcal{L}(F\cup A)$, let $L_{|A}/F_{act}$ denote the projection of $L$ onto $A$, followed by a substitution of the actions in $A$ with the corresponding fluents in $F_{act}$. Note that both $L_{|F}$ and $L_{|A}/F_{act}$ are literal sets on fluents in $F'$, i.e.~the dependence on actions in $A$ is removed.

The first four actions in the set $A'$ allow us to switch between simulation phases, and are defined as follows:
\begin{table}[h]
\noindent
\begin{tabular}{ll}
	$\mathsf{select\mh phase}$: & $\mr{pre}=\{\mathsf{free}\}$,\\[2pt]
	& $\mr{cond}=\{\emptyset\rhd\{\neg\mathsf{free},\mathsf{select}\}\}$.\\[2pt]
	$\mathsf{apply\mh phase}$: & $\mr{pre}=\{\mathsf{select}\}$,\\[2pt]
	& $\mr{cond}=\{\emptyset\rhd\{\neg\mathsf{select},\mathsf{apply}\}\}$.\\[2pt]
	$\mathsf{reset\mh phase}$: & $\mr{pre}=\{\mathsf{apply}\}$,\\[2pt]
	& $\mr{cond}=\{\emptyset\rhd\{\neg\mathsf{apply},\mathsf{reset}\}\}$.\\[2pt]
	$\mathsf{finish}$: & $\mr{pre}=\{\mathsf{reset},\mathsf{free\mh agent}(i): i\in N\}$,\\[2pt]
	& $\mr{cond}=\{\emptyset\rhd\{\neg\mathsf{reset},\mathsf{free}\}\}$.
\end{tabular}
\end{table}

For each action $a^i\in A^i$ in the action set of agent $i$, we define three new actions in $A'$: $\mathsf{select\mh}a^i$, $\mathsf{do\mh}a^i$ and $\mathsf{end\mh}a^i$. These actions represent the three steps that an agent must perform during the simulation of a joint action.

The action $\mathsf{select\mh}a^i$ causes $i$ to select action $a^i$ during the selection phase, and is defined as follows:
\begin{align*}
\mr{pre}&=\{\mathsf{select, free\mh agent}(i)\} \cup \pre(a^i)_{|F},\\
\mr{cond}&=\{\emptyset\rhd\{\mathsf{busy\mh agent}(i)\mathsf{, \neg free\mh agent}(i)\mathsf{, active\mh}a^i\}\}.
\end{align*}
The precondition ensures that we are in the selection phase, that $i$ is free to select an action, and that the precondition of $a^i$ holds on fluents in $F$. The effect prevents $i$ from selecting another action, and marks $a^i$ as selected.

The action $\mathsf{do\mh}a^i$ applies the effect of $a^i$ in the application phase, and is defined as follows:
\begin{align*}
\mr{pre} &= \{\mathsf{apply, busy\mh agent}(i)\mathsf{, active\mh}a^i\}\cup\pre(a^i)_{|A}/F_{act},\\
\mr{cond}&=\{\emptyset\rhd\{\mathsf{done\mh agent}(i)\mathsf{, \neg busy\mh agent}(i)\}\}\\
&\,\cup\{C_{|F}\cup C_{|A}/F_{act} \rhd E : C \rhd E \in \mathsf{cond}(a^i) \}.
\end{align*}
The precondition ensures that we are in the application phase, that $a^i$ was previously selected, and that all concurrency constraints in the precondition of $a^i$ hold. The effect is to apply all conditional effects of $a^i$, where each condition $C_{|F}\cup C_{|A}/F_{act}$ is generated from $C$ by substituting each action $a^j\in A$ with $\mathsf{active\mh}a^j$. Agent $i$ is also marked as done to prevent $a^i$ from being applied a second time.

The action $\mathsf{end\mh}a^i$ resets auxiliary fluents to their original value, and is defined as follows:
\begin{align*}
\mr{pre}&=\{\mathsf{reset, done\mh agent}(i)\mathsf{, active\mh}a^i\},\\
\mr{cond}&=\{\emptyset\rhd\{\mathsf{free\mh agent}(i)\mathsf{, \neg done\mh agent}(i), \mathsf{\neg active\mh}a^i\}\}.
\end{align*}
The precondition ensures that we are in the reset phase and that $a^i$ was previously selected and applied (due to $\mathsf{done\mh agent}(i)$). The effect is to make agent $i$ free to select actions again, and to mark $a^i$ as no longer selected.

Again, by inspection we can see that the total number of actions in $A'$ is given by $|A'|=4 + 3\sum_i|A^i|=O(|A|)$.

\subsection{Properties}

\begin{figure}
	\centering
	\includegraphics[width=\columnwidth]{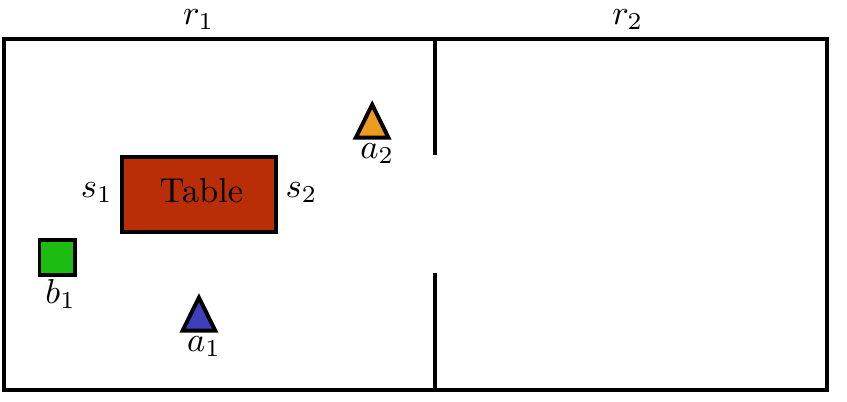}%
	\caption{Initial state of a simple \textsc{TableMover} instance.}
	\label{fig:tablemover_example_fig}
\end{figure}

Figure~\ref{fig:tablemover_example_fig} shows an example instance of \textsc{TableMover} in which the goal is for agents $a_1$ and $a_2$ to move block $b_1$ from room $r_1$ to room $r_2$. An example of concurrent plan that solves this instance is defined as follows:

\begin{lstlisting}[numbers=left, numbersep=2pt, xleftmargin=1.1em, basicstyle=\fontsize{8}{8}\ttfamily]
(to-table a1 r1 s2)(pickup-floor a2 b1 r1)
(putdown-table a2 b1 r1)
(to-table a2 r1 s1)
(lift-side a1 s2)(lift-side a2 s1)
(move-table a1 r1 r2 s2)(move-table a2 r1 r2 s1)
(lower-side a1 s2)
\end{lstlisting}
In this plan, agent $a_2$ first puts the block on the table, and then $a_1$ and $a_2$ concurrently lift each side of the table and move the table to room $r_2$. Finally, $a_1$ lowers its side of the table, causing the table to tip and the block to fall to the floor.

The following sequence of classical actions in $A'$ can be used to simulate the first joint action of the concurrent plan:
\begin{lstlisting}[numbers=left, numbersep=2pt, xleftmargin=1.1em, basicstyle=\fontsize{8}{8}\ttfamily]
(select-phase )
(select-to-table a1 r1 s2)
(select-pickup-floor a2 b1 r1)
(apply-phase )
(do-pickup-floor a2 b1 r1)
(do-to-table a1 r1 s2)
(reset-phase )
(end-to-table a1 r1 s2)
(end-pickup-floor a2 b1 r1)
(finish )
\end{lstlisting}

\noindent
We show that the compilation is both sound and complete.

\begin{theorem}[Soundness] A classical plan $\pi'$ that solves $\Pi'$ can be transformed into a concurrent plan $\pi$ that solves $\Pi$.
\end{theorem}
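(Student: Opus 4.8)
The plan is to extract the concurrent plan $\pi$ directly from the classical plan $\pi'$ by exploiting the rigid cyclic ``phase'' structure that $\Pi'$ forces on every solution. The workhorse is a \emph{phase lemma}, proved by a routine induction on prefixes of $\pi'$: in every state reachable by a prefix, exactly one of $\mathsf{free},\mathsf{select},\mathsf{apply},\mathsf{reset}$ is true, and across consecutive states this value changes only in the cyclic order $\mathsf{free}\to\mathsf{select}\to\mathsf{apply}\to\mathsf{reset}\to\mathsf{free}$, the four transitions being performed respectively by $\mathsf{select\mh phase}$, $\mathsf{apply\mh phase}$, $\mathsf{reset\mh phase}$, $\mathsf{finish}$; this holds because none of the $\mathsf{select\mh}a^i$, $\mathsf{do\mh}a^i$, $\mathsf{end\mh}a^i$ actions touches the phase fluents, and each phase action is enabled only when the preceding phase fluent holds. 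Consequently $\pi'$ (when nonempty --- otherwise $G\subseteq I$ and $\pi=\langle\rangle$ works) decomposes into $m\ge 1$ \emph{rounds} $R_1,\dots,R_m$, where $R_j$ runs from the $j$-th $\mathsf{select\mh phase}$ up to and including the next $\mathsf{finish}$, and inside $R_j$ the actions appear as $\mathsf{select\mh phase}$, a block of $\mathsf{select\mh}\cdot$ actions, $\mathsf{apply\mh phase}$, a block of $\mathsf{do\mh}\cdot$ actions, $\mathsf{reset\mh phase}$, a block of $\mathsf{end\mh}\cdot$ actions, $\mathsf{finish}$. Since $\mathsf{free}\in G'$ and $\mathsf{free}$ holds only in $I'$ and immediately after a $\mathsf{finish}$, $\pi'$ has no actions after $R_m$ and every round is complete.

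Next I would read off the joint actions and track the auxiliary fluents. By a second induction (on rounds) one shows that each round is entered with all auxiliary fluents at their defaults ($\mathsf{free}$ and all $\mathsf{free\mh agent}(i)$ true, all $\mathsf{busy\mh}$/$\mathsf{done\mh}$/$\mathsf{active\mh}$ false): the only action ending a round, $\mathsf{finish}$, requires $\mathsf{free\mh agent}(i)$ for all $i$, and the only way to regain $\mathsf{free\mh agent}(i)$ after a $\mathsf{select\mh}a^i$ (which deletes it and sets $\mathsf{busy\mh agent}(i)$, $\mathsf{active\mh}a^i$) is a $\mathsf{do\mh}a^i$ followed by an $\mathsf{end\mh}a^i$, the latter clearing $\mathsf{active\mh}a^i$. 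Define $a_j=\{a^i : \mathsf{select\mh}a^i \text{ occurs in }R_j\}$; since $\mathsf{select\mh}a^i$ deletes $\mathsf{free\mh agent}(i)$ and nothing in the select block restores it, $|A^i\cap a_j|\le 1$, so $a_j$ is a legal joint action, and by the above each $a^i\in a_j$ is matched by exactly one $\mathsf{do\mh}a^i$ and one $\mathsf{end\mh}a^i$ in $R_j$, with no stray $\mathsf{do\mh}\cdot$/$\mathsf{end\mh}\cdot$ actions. Applicability of $a_j$ in the $F$-state $s_{j-1}$ at the start of $R_j$ follows: the select block and $\mathsf{select\mh phase}$ do not change $F$, so each $\pre(a^i)_{|F}$ (a precondition of $\mathsf{select\mh}a^i$) holds in $s_{j-1}$, hence $\pre(a_j)\subseteq s_{j-1}$; and throughout the apply block the true fluents of $F_{act}$ are exactly $\{\mathsf{active\mh}a^i : a^i\in a_j\}$, so each $\pre(a^i)_{|A}/F_{act}$ in the precondition of $\mathsf{do\mh}a^i$ holds iff $\pre(a^i)_{|A}\subseteq L(a_j)$, which is precisely the concurrency-constraint requirement for $a_j$.

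Finally, set $\pi=\langle a_1,\dots,a_m\rangle$ and let $\langle t_0,\dots,t_m\rangle$ be the states it induces under the multiagent semantics ($t_0=I$, $t_j=\theta(t_{j-1},a_j)$). I would prove by induction on $j$ that $t_j$ equals the restriction to $F$ of the state reached at the end of $R_j$: the select, reset and finish blocks leave $F$ unchanged, so inside $R_j$ only the $\mathsf{do\mh}\cdot$ block modifies $F$, starting (by the induction hypothesis and the default-fluents claim) from $t_{j-1}$; because the $F_{act}$ fluents are frozen during that block, each compiled condition $C_{|F}\cup C_{|A}/F_{act}$ reduces to ``$C_{|F}$ holds in the current $F$-state and $C_{|A}\subseteq L(a_j)$'', and it remains to argue that the cumulative effect of the block equals $\eff(t_{j-1},a_j)=\bigcup_{a^i\in a_j}\eff(t_{j-1}\cup L(a_j),a^i)$, so that the state reached is $\theta(t_{j-1},a_j)=t_j$. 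Then $G\subseteq t_m$, since $\pi'$ reaches $G'=G\cup\{\mathsf{free}\}$, the goal $G$ is a set of literals over $F$, and $t_m$ is the $F$-part of the final state of $\pi'$; hence $\pi$ solves $\Pi$.

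The main obstacle is exactly this last equality between the $\mathsf{do\mh}\cdot$ block and $\eff(t_{j-1},a_j)$. The compiled $\mathsf{do\mh}a^i$ actions fire one after another and therefore evaluate their $F$-conditions against partially updated states, whereas the joint-action semantics evaluates every constituent atomic action's conditions against the common pre-state $t_{j-1}$. Reconciling the two requires the guarantee that no $a^i\in a_j$ writes a fluent occurring in the condition or the $F$-precondition of another $a^{i'}\in a_j$ --- a ``no moving targets'' property for concurrently executed atomic actions --- which I would either derive from the paper's standing well-definedness assumptions on joint actions or add as an explicit hypothesis; granted it, the $\mathsf{do\mh}\cdot$ block is order-invariant and realizes $\eff(t_{j-1},a_j)$ exactly, and the induction closes. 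Once the phase lemma and the default-fluents invariant are established, everything else above is bookkeeping.
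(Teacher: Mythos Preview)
Your approach mirrors the paper's proof: decompose $\pi'$ into phase cycles, extract one joint action per cycle from the $\mathsf{select\mh}\cdot$ block, check $F$-preconditions in the select phase (where $F$ is static) and concurrency constraints in the apply phase (via the frozen $F_{act}$ fluents), and conclude that the induced concurrent plan reaches $G$. The ``no moving targets'' obstacle you flag at the end is genuine and is \emph{not} addressed in the paper's own proof either---the paper simply asserts that ``the conditional effects of $a^i$ are correctly applied'' without confronting the fact that the $\mathsf{do\mh}\cdot$ actions evaluate their $F$-conditions against partially updated states---so your proposal is, if anything, more careful than the original.
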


\begin{proof}
When fluent $\mathsf{free}$ is true, the only applicable action is $\mathsf{select\mh phase}$. The only way to make $\mathsf{free}$ true again is to cycle through the three phases and end with the $\mathsf{finish}$ action.

During the selection phase, a subset of actions $a^1,\ldots,a^k$ are selected, causing the corresponding agents to be busy. Because of the precondition $\mathsf{free\mh agent}(i)$ of the $\mathsf{finish}$ action, each selected action $a^i$ has to be applied in the application phase, and reset in the reset phase. The resulting simulated joint action is given by $a=\{a^1,\ldots,a^k\}$.

The precondition of $a$ holds since the precondition of each $a^i$ on fluents in $F$ is checked in the selection phase, during which no fluents in $F$ change values. The concurrency constraints of $a^i$ are checked in the application phase when all actions have already been selected. This also ensures that the conditional effects of $a^i$ are correctly applied. Finally, auxiliary fluents are cleaned in the reset phase. Hence the joint action $a$ satisfies all concurrency constraints and is correctly simulated by the corresponding action subsequence of $\pi'$.

Let $\pi$ be the concurrent plan composed of the sequence of joint actions simulated by the plan $\pi'$. Since $\pi'$ solves $\Pi'$, the goal condition $G$ holds at the end of $\pi'$, implying that $G$ also holds at the end of $\pi$. This implies that $\pi$ solves $\Pi$.
\end{proof}

\begin{theorem}[Completeness] A concurrent plan $\pi$ that solves $\Pi$ corresponds to a classical plan $\pi'$ that solves $\Pi'$.
\end{theorem}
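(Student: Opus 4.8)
The plan is to replace each joint action of $\pi$ by the block of classical actions that simulates it — exactly as in the worked example above — and then prove by induction that each block carries the compiled state correctly. Write $\pi=\langle a_1,\dots,a_m\rangle$ with induced state sequence $\langle s_0,\dots,s_m\rangle$, $s_0=I$ and $G\subseteq s_m$. Fix $t$ and enumerate $a_t=\{a^1,\dots,a^k\}$, recalling that these atomic actions belong to $k$ distinct agents; let $B_t$ be the block $\mathsf{select\mh phase}$; $\mathsf{select\mh}a^1,\dots,\mathsf{select\mh}a^k$; $\mathsf{apply\mh phase}$; $\mathsf{do\mh}a^1,\dots,\mathsf{do\mh}a^k$; $\mathsf{reset\mh phase}$; $\mathsf{end\mh}a^1,\dots,\mathsf{end\mh}a^k$; $\mathsf{finish}$. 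Set $\pi'=B_1B_2\cdots B_m$. I would prove the invariant: running $B_1\cdots B_t$ from $I'$ reaches a state $\sigma_t$ with $(\sigma_t)_{|F}=s_t$, with $\mathsf{free}$ and every $\mathsf{free\mh agent}(i)$ true, and with every other auxiliary fluent ($\mathsf{select},\mathsf{apply},\mathsf{reset}$, all $\mathsf{busy\mh agent}$ and $\mathsf{done\mh agent}$, and all of $F_{act}$) false. The base case is $\sigma_0=I'$, which holds by the definition of $I'$.

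For the inductive step I would run through $B_{t+1}$ in order, checking applicability and tracking the auxiliary fluents. No action of the selection sub-block touches $F$, so $\mathsf{select\mh phase}$ fires on $\mathsf{free}$ and each $\mathsf{select\mh}a^j$ fires because $\mathsf{select}$ holds, $\mathsf{free\mh agent}(i)$ is still true for $a^j$'s agent $i$ (the selected actions have distinct agents), and $\pre(a^j)_{|F}\subseteq s_t$, this last because applicability of $a_{t+1}$ gives $\pre(a_{t+1})=\bigcup_j\pre(a^j)_{|F}\subseteq s_t$; after this sub-block the $F_{act}$-part of the state is exactly the encoding of $L(a_{t+1})$. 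Then $\mathsf{apply\mh phase}$ fires on $\mathsf{select}$, and each $\mathsf{do\mh}a^j$ fires because its only precondition beyond the true literals $\mathsf{apply},\mathsf{busy\mh agent}(i),\mathsf{active\mh}a^j$ is $\pre(a^j)_{|A}/F_{act}$, which holds exactly because $a_{t+1}$ satisfies the concurrency constraints, i.e.~$\pre(a^j)_{|A}\subseteq L(a_{t+1})$; since no $\mathsf{do}$ action modifies $F_{act}$ this stays true whatever order we use. Then $\mathsf{reset\mh phase}$ fires on $\mathsf{apply}$, each $\mathsf{end\mh}a^j$ fires on $\mathsf{reset},\mathsf{done\mh agent}(i),\mathsf{active\mh}a^j$, afterwards all of $F_{act}$ is false and every $\mathsf{free\mh agent}(i)$ holds, so $\mathsf{finish}$ fires, makes $\mathsf{free}$ true and clears $\mathsf{reset}$. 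All auxiliary fluents are then back to their idle values, so the invariant follows once we establish $(\sigma_{t+1})_{|F}=s_{t+1}$.

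That equality carries the weight of the argument, and I expect it to be the main obstacle. Applying a single $\mathsf{do\mh}a^j$ in a state whose $F$-part is $s_t$ and whose $F_{act}$-part encodes $L(a_{t+1})$ has triggered $F$-effect exactly $\eff(s_t\cup L(a_{t+1}),a^j)$, since the compiled condition $C_{|F}\cup C_{|A}/F_{act}$ holds there iff $C\subseteq s_t\cup L(a_{t+1})$. The subtlety is that the $\mathsf{do}$ actions of $B_{t+1}$ run one after another, so later ones are evaluated in an $F$-state already altered by earlier ones, whereas the joint-action semantics evaluates every $\eff(s_t\cup L(a_{t+1}),a^j)$ against the one common state $s_t\cup L(a_{t+1})$. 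I would close the gap by appealing to the intended ``simultaneous'' reading of joint-action effects: under the standard well-behavedness condition that, within a joint action, the effect literals of one atomic action do not change the truth of any fluent occurring in another atomic action's conditional-effect conditions, the order of the $\mathsf{do\mh}a^j$ is immaterial and each contributes precisely $\eff(s_t\cup L(a_{t+1}),a^j)$; summing, the $F$-state moves from $s_t$ to $(s_t\setminus\neg\eff(s_t,a_{t+1}))\cup\eff(s_t,a_{t+1})=\theta(s_t,a_{t+1})=s_{t+1}$, using that $\eff(s_t,a_{t+1})=\bigcup_j\eff(s_t\cup L(a_{t+1}),a^j)$ is well-defined. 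With the invariant in hand for all $t$, $(\sigma_m)_{|F}=s_m\supseteq G$ and $\mathsf{free}\in\sigma_m$, so $G'=G\cup\{\mathsf{free}\}\subseteq\sigma_m$ and $\pi'$ solves $\Pi'$.
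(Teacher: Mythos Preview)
Your proposal is correct and follows essentially the same approach as the paper: replace each joint action by the select/apply/reset block and concatenate, arguing that preconditions and concurrency constraints of the joint action guarantee applicability of every action in the block and that the goal then transfers. The paper's proof is a three-line sketch that simply asserts the block ``results in the same effect as $a$''; you spell out the induction invariant explicitly and, notably, isolate a genuine subtlety the paper does not address---that the $\mathsf{do\mh}a^j$ actions run sequentially and so later conditions $C_{|F}$ are evaluated in an already-modified $F$-state---resolving it via a non-interference assumption on the atomic actions of a well-formed joint action, which is indeed what is tacitly needed for the claim to go through.
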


\begin{proof}
Let $a=\{a^1,\ldots,a^k\}$ be a joint action of the concurrent plan $\pi$. We can use a sequence of actions in $A'$ to simulate $a$ by selecting, applying and resetting each action among $a^1,\ldots,a^k$. Since $a$ is part of $\pi$, its precondition and concurrency constraints have to hold, implying that the precondition and concurrency constraints of each atomic action hold. Hence the action sequence is applicable and results in the same effect as $a$. By concatenating such action sequences for each joint action of $\pi$, we obtain a plan $\pi'$. Since $\pi$ solves $\Pi$, the goal condition $G$ holds at the end of $\pi$, implying that $G$ holds at the end of $\pi'$. This implies that $\pi'$ solves $\Pi'$.
\end{proof}

\subsection{Extensions}
The basic compilation checks concurrency constraints in the application phase. Here we describe an extension that checks negative concurrency constraints in the selection phase, allowing a classical planner to identify inadmissible joint actions as early as possible, reducing the branching factor.

Assume that action $a^i$ has a negative concurrency constraint $\neg a^j$. As before, we can simulate this constraint using the fluent $\neg \mathsf{active\mh}a^j$. However, $a^j$ may be selected {\em after} $a^i$ in the selection phase, in which case $\neg \mathsf{active\mh}a^j$ holds when selecting $a^i$. To prevent inadmissible joint actions from being selected, we introduce additional fluents in the set $F'$:

\begin{itemize}
	\item For each action $a^i\in A^i$ in the action set of agent $i$, a fluent $\mathsf{req\mh neg\mh}a^i$ which indicates that $a^i$ cannot be selected.
\end{itemize}
We now redefine the action $\mathsf{select\mh}a^i$ as follows:
\begin{align*}
\mr{pre}&=\{\mathsf{select, free\mh agent}(i),\neg\mathsf{req\mh neg\mh}a^i\} \cup \pre(a^i)_{|F}\\ &\cup \; \{ \neg\mathsf{active\mh}a^j : \neg a^j \in \pre(a^i)\},\\
\mr{cond}&=\{\emptyset\rhd\{\mathsf{busy\mh agent}(i)\mathsf{, \neg free\mh agent}(i)\mathsf{, active\mh}a^i\}\}\\
&\cup\;\{\emptyset\rhd\{\mathsf{req\mh neg\mh}a^j : \neg a^j \in \pre(a^i)\}\}.
\end{align*}
To select $a^i$, $\mathsf{req\mh neg\mh}a^i$ has to be false. For each negative concurrency constraint $\neg a^j$ of $a^i$, action $\mathsf{select\mh}a^i$ adds fluent $\mathsf{req\mh neg\mh}a^j$, preventing $a^j$ from being selected after $a^i$.

With this extension, we only need to check {\em positive} concurrency constraints in the application phase. We also redefine $\mathsf{end\mh}a^i$ such that fluents of type $\mathsf{req\mh neg\mh}a^i$ are reset in the cleanup phase, using the opposite effect of $\mathsf{select\mh}a^i$. The initial state and goal condition do not change since the new fluents are always false while no joint action is simulated. 

The second extension is to impose a bound $C$ on the number of atomic actions in the selection phase, resulting in a classical planning problem $\Pi_C'=\langle F_C',A_C',I_C',G_C' \rangle$. The fluent set $F'_C\supseteq F'$ extends $F'$ with fluents $\mathsf{count}(j)$, $0\leq j\leq C$. We add counter parameters to select and reset actions so that they can respectively increment and decrement the value of the counter. Crucially, no select action is applicable when $j=C$, preventing us from selecting more than $C$ actions. The benefit is to reduce the branching factor by restricting joint actions to have at most $C$ atomic actions.

We leave the following proposition without proof:

\begin{prop}
	The compilation $\Pi_C'$ that includes both proposed extensions is sound.
\end{prop}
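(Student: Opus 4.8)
The plan is to reduce soundness of $\Pi_C'$ to the argument already used for Theorem~1, isolating the two new ingredients — the early negative-constraint check in the selection phase and the counter bound — and showing that neither invalidates the faithful simulation of joint actions. Concretely, given a classical plan $\pi'$ that solves $\Pi_C'$, I would extract from it a sequence of simulated joint actions, argue that each is an applicable joint action of $\Pi$ with the same effect, and conclude that the resulting concurrent plan $\pi$ solves $\Pi$.

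First I would observe that the phase-control fluents $\mathsf{free},\mathsf{select},\mathsf{apply},\mathsf{reset}$ and the agent-state fluents $\mathsf{free\mh agent}(i),\mathsf{busy\mh agent}(i),\mathsf{done\mh agent}(i)$ are untouched by both extensions, and that the four phase actions and the bodies of $\mathsf{do\mh}a^i$ and $\mathsf{end\mh}a^i$ (apart from the extra $\mathsf{req\mh neg}$ reset effect) are unchanged. Hence, exactly as in the proof of Theorem~1, $\pi'$ decomposes into blocks, each beginning with $\mathsf{select\mh phase}$ and ending with $\mathsf{finish}$ and cycling through selection, application and reset; within a block a set of atomic actions $a^1,\ldots,a^k$ is selected (making the corresponding agents busy), and because of the precondition $\mathsf{free\mh agent}(i)$ of $\mathsf{finish}$ each selected $a^i$ must be applied (making agent $i$ done) and then reset (making agent $i$ free again). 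Thus each block simulates a candidate joint action $a=\{a^1,\ldots,a^k\}$; since each agent contributes at most one action and the $F$-precondition of every $a^i$ is checked in the selection phase, during which no fluent of $F$ changes value, $\pre(a)$ holds. I would also note that the counter enforces $k\le C$, which is consistent with $a$ being a joint action of $\Pi$.

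The crux is verifying that each simulated $a$ satisfies all concurrency constraints, i.e.\ that $\pre(a^i)_{|A}$ holds in $L(a)$ for every $i$. A positive constraint $a^j\in\pre(a^i)$ is checked in the application phase by the precondition $\pre(a^i)_{|A}/F_{act}$ of $\mathsf{do\mh}a^i$, exactly as in the basic compilation, and by that point all of $a^1,\ldots,a^k$ have been selected, so $\mathsf{active\mh}a^j$ faithfully records whether $a^j\in a$. For a negative constraint $\neg a^j\in\pre(a^i)$ I would argue by cases on the order of the select actions inside the block: if $\mathsf{select\mh}a^i$ precedes $\mathsf{select\mh}a^j$, then the former adds $\mathsf{req\mh neg\mh}a^j$, which blocks $\mathsf{select\mh}a^j$ through its new precondition $\neg\mathsf{req\mh neg\mh}a^j$; if $\mathsf{select\mh}a^j$ precedes $\mathsf{select\mh}a^i$, then $\mathsf{active\mh}a^j$ is already true, which blocks $\mathsf{select\mh}a^i$ through its new precondition $\neg\mathsf{active\mh}a^j$. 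Either way $a^j\notin a$, so $\neg a^j$ holds in $L(a)$. Consequently $a$ is an applicable joint action of $\Pi$, and since $\mathsf{do\mh}a^i$ still applies exactly the conditional effects $(C_{|F}\cup C_{|A}/F_{act})\rhd E$ of $a^i$ and the other actions in the block do not touch $F$, the net effect of the block on $F$ equals $\eff(s,a)$.

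Finally I would close the loop: the sequence of joint actions simulated by the blocks of $\pi'$ forms a concurrent plan $\pi$ executable from $I$ (the $F$-restriction of $I_C'$), and since $G_C'\supseteq G\cup\{\mathsf{free}\}$ and $\pi'$ solves $\Pi_C'$, the goal $G$ holds after executing $\pi$, so $\pi$ solves $\Pi$. I expect the main obstacle to be the bookkeeping around the auxiliary $\mathsf{req\mh neg}$ fluents: one must check that they are fully cleared at the end of every block — each $\mathsf{req\mh neg\mh}a^j$ is added only by some $\mathsf{select\mh}a^i$ with $\neg a^j\in\pre(a^i)$ and is deleted by the matching $\mathsf{end\mh}a^i$, and every selected action is necessarily reset before $\mathsf{finish}$ — so that the state of these fluents never leaks from one simulated joint action into the next, and likewise that the counter is restored so successive blocks behave identically. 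Once this is verified, the counter contributes nothing subtle, since it only prunes plans and never enables an unsound simulation.
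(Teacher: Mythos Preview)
The paper does not actually prove this proposition; it explicitly states ``We leave the following proposition without proof.'' There is thus no paper argument to compare against. Your proposal is correct and is precisely the natural argument: reuse the phase-decomposition reasoning from Theorem~1 and verify that neither extension enables an unsound simulation. Your two-case analysis for negative constraints in the selection phase (either $\mathsf{select\mh}a^i$ precedes $\mathsf{select\mh}a^j$ and blocks it via $\mathsf{req\mh neg\mh}a^j$, or vice versa and $\mathsf{select\mh}a^i$ is blocked via $\neg\mathsf{active\mh}a^j$) is exactly the right observation, and your remark that the counter merely prunes executions and never enables one is the correct way to dispose of the second extension. The bookkeeping concern you flag about $\mathsf{req\mh neg}$ fluents being fully cleared by the matching $\mathsf{end\mh}a^i$ before $\mathsf{finish}$, so that nothing leaks between blocks, is the one place requiring care, and you address it correctly. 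One small inaccuracy: the paper indicates that with the first extension only \emph{positive} constraints remain in the precondition of $\mathsf{do\mh}a^i$, so $\mathsf{do\mh}a^i$ is not quite ``unchanged''; this does not affect your argument, since you already establish negative-constraint satisfaction in the selection phase.
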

Note that the compilation $\Pi_C'$ is not complete. For instance, consider a concurrent multiagent plan that contains a joint action involving 4 atomic actions. If $C < 4$, then the concurrent multiagent plan cannot be converted into an equivalent classical plan without exceeding the bound $C$.

\section{Experimental Results}
\label{sec:results}
We tested our compilations in four concurrent domains: \textsc{TableMover}, \textsc{Maze}, \textsc{Workshop} and \textsc{BoxPushing}\footnote{The code of the compilation and the domains are available at https://github.com/aig-upf/universal-pddl-parser-multiagent.}.

In each domain, we used three variants of our compilations: unbounded joint action size, and joint action size bounded by $C=2$ and $C=4$. In all variants, we used the extension that identifies negative concurrency constraints in the selection phase. The resulting classical planning problems were solved using Fast Downward~\cite{Helmert06} in the LAMA setting~\cite{richter:lama:JAIR2010}.
All experiments ran on Intel Xeon E5-2673 v4 @ 2.3GHz processors, with a time limit of 30 minutes and a memory limit of 8 GB.

The \textsc{Maze} domain \cite{crosby2014multiagent} consists of a grid of interconnected locations. Each agent in the maze must move from an initial location to a target location. The connection between two adjacent locations can be either a door, a bridge or a boat. A door can only be used by one agent at once, a bridge is destroyed upon first use, and a boat can only be used by two or more agents in the same direction.

The \textsc{Workshop} domain is a new domain in which the objective is to perform inventory in a high-security storage facility. It has the following characteristics:
\begin{itemize}
	\item To open a door, one agent has to press a switch while another agent simultaneously turns a key.
	\item To do inventory on a pallet, one agent has to use a forklift to lift the pallet while another agent examines it (for security reasons, labels are located underneath pallets).
	\item There are also actions for picking up a key, entering or exiting a forklift, moving an agent, and driving a forklift.
\end{itemize}

The \textsc{BoxPushing} domain \cite{BrafmanZoran14} consists in a grid of interconnected cells. Agents must push boxes from one cell to another. Boxes have different sizes and require different numbers of agents to push (1, 2 or 3).

We use two algorithms for comparison: \citeauthor{CrosbyJR14}~(\citeyear{CrosbyJR14}) and \citeauthor{ShekharB18}~(\citeyear{ShekharB18}), which we refer to as CJR and SB respectively. Both algorithms define concurrency constraints in the form of affordances on sets of objects. {For example, the affordance on the object set $\{\text{location}, \text{boat}\}$ in $\textsc{Maze}$ is defined as $\left[2, \infty\right]$ in CJR, representing that at least two agents have to row the boat between the same two locations at once. SB define the same affordance as $\left[2, 2\right]$, only allowing the minimum number of agents necessary to row a boat (i.e.~$2$).}

CJR do not separate atomic action selection from atomic action application. This is a problem since one of the atomic actions can delete the precondition of other atomic actions, thus canceling the formation of the joint action. For example, in the \textsc{Maze} domain, the action for crossing a bridge requires that the bridge exists, and destroys the bridge as an effect. Therefore, as this approach does not separate the selection from the application, this action can be done just by one agent at a time (and not by infinite agents as the problem states). The same occurs in the \textsc{BoxPushing} domain. Instances where a medium or a large box must be moved cannot be solved with this approach because the first agent to ``push'' the box will move it. Thus, the box location precondition for the other agent(s) does not hold, so the box is not moved in the end. On the other hand, SB extend CJR with mechanisms to avoid this problem, deferring effects until after all agents have applied their atomic actions.

Moreover, concurrency constraints in the form of object affordances are not as expressive as those of~\citeauthor{kovacs2012multi}~(\citeyear{kovacs2012multi}):
\begin{itemize}
	\item Actions cannot appear in conditional effects, making it impossible to model \textsc{TableMover} instances (SB present results from a simplified version without blocks).
	\item To define concurrency constraints, actions need at least one shared object, which is not the case in \textsc{Workshop}.
\end{itemize}

In experiments, we used Fast Downward in the LAMA setting to solve the instances produced by CJR and SB.

\begin{table*}[]
\centering
\resizebox{\textwidth}{!}{%

\begin{tabular}{lrrrrrrrrrrrrrrrrrrrrrrrrr}
\toprule
Domain              &$N$ & \multicolumn{5}{c}{Coverage}                                                                                                        & & \multicolumn{5}{c}{Time (s.)}                                                                                                    & & \multicolumn{5}{c}{Makespan}                                                                                                      & & \multicolumn{5}{c}{\# Grounded actions ($\times\num{e3}$)}           \\ \cmidrule{3-7} \cmidrule{9-13} \cmidrule{15-19} \cmidrule{21-25}
                    &    & \multicolumn{1}{c}{2} & \multicolumn{1}{c}{4} & \multicolumn{1}{c}{$\infty$} & \multicolumn{1}{c}{CJR }  & \multicolumn{1}{c}{SB}   & & \multicolumn{1}{c}{2} & \multicolumn{1}{c}{4} & \multicolumn{1}{c}{$\infty$} & \multicolumn{1}{c}{CJR } & \multicolumn{1}{c}{SB} & & \multicolumn{1}{c}{2} & \multicolumn{1}{c}{4} & \multicolumn{1}{c}{$\infty$} & \multicolumn{1}{c}{CJR } & \multicolumn{1}{c}{SB}  & & \multicolumn{1}{c}{2} & \multicolumn{1}{c}{4} & \multicolumn{1}{c}{$\infty$} & \multicolumn{1}{c}{CJR } & \multicolumn{1}{c}{SB} \\
\midrule
\textsc{Maze}       & 20 & \textbf{13}           & 8                     & 6                            & 11                        & 9                        & & 361.5                 & 444.2                 & \textbf{145.6}               & 195.1                    & 216.1                  & & 47.2                  & 22.0                  & \textbf{11.7}                & 77.3                     & 67.7                    & & 41.7                  & 69.3                  & \textbf{27.9}                & 156.8                    & 108.2                       \\
$a=10$              & 10 & \textbf{8}            & 6                     & 5                            & 7                         & 6                        & & 250.2                 & 575.6                 & \textbf{170.4}               & 228.4                    & 323.1                  & & 48.3                  & 25.0                  & \textbf{12.2}                & 79.6                     & 69.8                    & & 39.9                  & 67.4                  & \textbf{26.1}                & 119.3                    & 102.1                       \\
$a=15$              & 10 & \textbf{5}            & 2                     & 1                            & 4                         & 3                        & & \textbf{539.5}        & -                     & -                            & -                        & -                      & & \textbf{45.4}         & -                     & -                            & -                        & -                       & & 43.9                  & 71.8                  & \textbf{30.0}                & 194.3                    & 115.1                      \\
\midrule
\textsc{BoxPushing} & 20 & 9                     & 15                    & 16                           & -                         & \textbf{18}              & & \textbf{5.2}          & 36.4                  & 143.3                        & -                        & 305.8                  & & \textbf{11.2}         & 11.3                  & 12.9                         & -                        & 20.5                    & & 3.5                   & 5.7                   & 2.5                          & -                        & \textbf{2.0}                               \\
$a=2$               & 10 & 9                     & 9                     & 9                            & -                         & \textbf{10}              & & \textbf{5.2}          & 7.6                   & 6.0                          & -                        & 158.9                  & & \textbf{11.2}         & 11.9                  & 11.3                         & -                        & 18.4                    & & 1.8                   & 3.2                   & \textbf{1.1}                 & -                        & 1.2                               \\
$a=4$               & 10 & 0                     & 6                     & 7                            & -                         & \textbf{8}               & & -                     & \textbf{79.7}         & 319.9                        & -                        & 489.5                  & & -                     & \textbf{10.5}         & 15                           & -                        & 23.1                    & & 5.2                   & 8.2                   & 3.8                          & -                        & \textbf{2.9}                               \\
\midrule
\textsc{TableMover} & 24 & \textbf{15}           & 12                    & \textbf{15}                  & -                         & -                        & & \textbf{263.4}        & 336.7                 & 341.1                        & -                        & -                      & & \textbf{58.7}         & 59.0                  & 61.5                         & -                        & -                       & & 7.4                   & 13.1                  & \textbf{4.6}                 & -                        & -                           \\
$a=2$               & 12 & 10                    & 10                    & \textbf{11}                  & -                         & -                        & & \textbf{103.9}        & 226.6                 & 214.7                        & -                        & -                      & & 63.5                  & \textbf{62.0}         & 64.5                         & -                        & -                       & & 3.4                   & 6.1                   & \textbf{2.0}                 & -                        & -                           \\
$a=4$               & 12 & \textbf{5}            & 2                     & 4                            & -                         & -                        & & \textbf{582.4}        & -                     & -                            & -                        & -                      & & \textbf{49.0}         & -                     & -                            & -                        & -                       & & 11.5                  & 20.1                  & \textbf{7.2}                 & -                        & -                           \\
\midrule
\textsc{Workshop}   & 20 & \textbf{15}           & 13                    & 13                           & -                         & -                        & & 134.3                 & 301.4                 & \textbf{52.5}                & -                        & -                      & & 35.7                  & 37.0                  & \textbf{32.5}                & -                        & -                       & & 18.0                  & 31.0                  & \textbf{11.5}                & -                        & -                              \\
$a=4$               & 10 & \textbf{8}            & \textbf{8}            & \textbf{8}                   & -                         & -                        & & 42.8                  & 263.3                 & \textbf{37.1}                & -                        & -                      & & \textbf{37.3}         & 43.9                  & \textbf{37.3}                & -                        & -                       & & 7.7                   & 13.6                  & \textbf{4.8}                 & -                        & -                              \\
$a=8$               & 10 & \textbf{7}            & 5                     & 5                            & -                         & -                        & & 238.8                 & 362.3                 & \textbf{77.1}                & -                        & -                      & & 33.9                  & 26.0                  & \textbf{24.8}                & -                        & -                       & & 28.2                  & 48.3                  & \textbf{18.1}                & -                        & -                              \\
\bottomrule
\end{tabular}%
}
\caption{Summary of results; see text for details. $a$ is the number of agents, $N$ is number of instances; time and length are averages for all planners that solved at least 5 instances. The number of grounded actions is an average over all instances.}
\label{tab:req_conc_results}
\end{table*}

Table~\ref{tab:req_conc_results} shows the results for the four domains. To provide an idea of how each planner behaves as a function of the number of agents, the table shows for each domain the same metrics for different numbers of agents.

In terms of coverage (i.e. number of solved instances), the compilation variant bounded to 2 performs the best (52, 61.9\%). The unbounded compilation ($\infty$) and the variant bounded to 4 have similar coverage: 50 (59.5\%) and 48 (57.1\%) respectively. The performance of the variant bounded to 2 is not very good in \textsc{BoxPushing} for instances involving four agents because all of them require a large box to be pushed (i.e. three agents are required). Finally, SB and CJR are the approaches with the worst coverage: 27 (32.1\%) and 11 (13.1\%) respectively. The main reason is that they cannot solve \textsc{TableMover} and \textsc{Workshop} instances; CJR cannot solve \textsc{BoxPushing} instances either.

Regarding execution time, the unbounded compilation and the compilation bounded to 2 are the fastest. The higher the number of agents, the longer it takes to compute a plan.

In terms of makespan (i.e.~number of joint actions), our approach obtains the shortest plans. CJR and SB obtain longer plans because they only construct joint actions associated with specific concurrency constraints. Any atomic action that can be applied on its own thus becomes a joint action of size 1. In contrast, our approach can combine atomic actions arbitrarily and compress the solution while planning.

The main reason that SB works better in \textsc{BoxPushing} is due to the hardcoded representation of collaborative actions that involve a minimum number of agents. For example, to push a box that requires $b$ agents to move, SB defines collaborative actions that involve exactly $b$ agents, while in our case, a joint action involving more than $b$ agents will also satisfy the concurrency constraints. This results in a larger branching factor which in turn affects the performance.

Note however that such a minimalist representation of collaborative actions is not always complete. For example, we can define a \textsc{Maze} instance where three agents have to use a boat to cross a stream. If we only define collaborative actions that involve the minimum number of agents needed to row a boat (i.e.~2), such an instance becomes unsolvable since no sequence of 2 agents rowing the boat in different directions is capable of moving all three agents to the other side. In contrast, our approach can generate a joint action that allows all three agents to cross the stream concurrently.

We also performed a scalability experiments in the \textsc{Maze} domain. We compare our approach (the $\infty$ variant) to the naive approach of converting a MAP into a classical problem by creating a classical action for each combination of agents. The instances consisted of (1) a 3x3 grid, (2) a set of agents with the same initial and goal locations, and (3) a single path to the goal that consists of interleaved boats and bridges.

Table \ref{tab:scalability_maze} shows the number of grounded actions and solution time for varying numbers of agents. The naive approach cannot solve instances with 7 or more agents due to grounding, while our approach can solve instances with 100 agents.

\begin{table}[]
	\centering
	\resizebox{0.9\columnwidth}{!}{%
		\begin{tabular}{rrrrrr}
			\toprule
			\#Agents             & \multicolumn{2}{c}{\# Grounded actions}                 & & \multicolumn{2}{c}{Time (s.)}                            \\ \cmidrule{2-3} \cmidrule{5-6} 
			\multicolumn{1}{c}{} & \multicolumn{1}{c}{Naive} & \multicolumn{1}{c}{$\infty$}& & \multicolumn{1}{c}{Naive} & \multicolumn{1}{c}{$\infty$} \\ \midrule
			2                    & 48                        & 100                         & & 0.089                     & 0.226                        \\
			4                    & 992                       & 260                         & & 0.494                     & 0.226                        \\
			6                    & 31248                     & 484                         & & 53.864                    & 0.354                        \\
			8                    & -                         & 772                         & & -                         & 0.535                        \\
			10                   & -                         & 1124                        & & -                         & 0.758                        \\
			50                   & -                         & 21604                       & & -                         & 41.979                       \\
			100                  & -                         & 83204                       & & -                         & 289.887                      \\ \bottomrule
		\end{tabular}%
	}
	\caption{Scalability of our approach ($\infty$) compared to the naive compilation in the \textsc{Maze} domain.}
	\label{tab:scalability_maze}
\end{table}

\section{Related Work}

Several other authors consider the problem of concurrent multiagent planning. \citeauthor{BoutilierB01}~(\citeyear{BoutilierB01}) describe a partial-order planning algorithm for solving MAPs with concurrent actions, based on their formulation of concurrency constraints, but do not present any experimental results. CMAP~\cite{Borrajo13} produces an initial sequential plan for solving a MAP, but performs a post-processing step to compress the sequential plan into a concurrent plan.

\citeauthor{JonssonR11}~(\citeyear{JonssonR11}) present a best-response approach for MAPs with concurrent actions, where each agent attempts to improve its own part of a concurrent plan while the actions of all other agents are fixed. However, their approach only serves to improve an existing concurrent plan, and is unable to compute an initial concurrent plan. FMAP~\cite{TorrenoOS14} is a partial-order planner that also allows agents to execute actions in parallel, but the authors do not present experimental results for MAP domains that require concurrency.

The planner of~\citeauthor{CrosbyJR14}~(\citeyear{CrosbyJR14}) is similar to ours in that it also converts MAPs into classical planning problems. The authors only present results from the \textsc{Maze} domain, and concurrency constraints are defined as affordances on object sets that appear as arguments of actions. As we have seen, these concurrency constraints are not as flexible as those of~\citeauthor{BoutilierB01}~(\citeyear{BoutilierB01}).

\citeauthor{BrafmanZoran14}~(\citeyear{BrafmanZoran14}) extended the MAFS multiagent distributed algorithm \cite{NissimB14} to support actions requiring concurrency while preserving privacy. Messages are exchanged between agents in order to inform each other about the expansion of relevant states. Consequently, agents explore the search space together while preserving privacy. As pointed out by \citeauthor{ShekharB18}~(\citeyear{ShekharB18}), it has two main problems: (1) it does not consider the issue of subsumed actions, and (2) it does not support concurrent actions that affect each others preconditions. 

\citeauthor{MaliahBS17}~(\citeyear{MaliahBS17}) proposed MAFBS, which extended MAFS to use forward and backward messages. This approach reduced the number of required messages and resulted in an increase in the privacy of agents.

\citeauthor{ChouhanN16}~(\citeyear{ChouhanN16}, \citeyear{ChouhanN17}) proposed a PDDL-like language for specifying problems involving required concurrency, which is very similar to the one by \citeauthor{BoutilierB01}~(\citeyear{BoutilierB01}). Their planner does not make assumptions on the number of agents required to perform a joint action; rather, the number of agents is determined from the \emph{capability} of agents and the objects they are interacting with. For example, in a robot domain, the number of robots required to lift an specific object can depend on the weight of the object.

\citeauthor{ShekharB18}~(\citeyear{ShekharB18}) extended the planner of \citeauthor{CrosbyJR14}~(\citeyear{CrosbyJR14}). Thus, it is also based on compiling the multiagent problem into a classical problem. With respect to previous work, they added collaborative actions and removed all collaborative actions that are subsumed by others (i.e.~that do not involve a minimum number of agents). Besides, they showed that their approach can also be used in a distributed privacy preserving planner.

Compilations from multiagent to classical planning have also been considered by other authors. \citeauthor{muise-codmap15}~(\citeyear{muise-codmap15}) proposed a transformation to respect privacy among agents. The resulting classical planning problem was then solved using a centralized classical planner as in our approach. Besides, compilations to classical planning have also been used in temporal planning, obtaining state-of-the-art results in many of the International Planning Competition domains~\cite{conf/icaps/Jimenez15}.

\section{Conclusion}
This paper proposes a new compilation for concurrent multiagent planning problems. As far as we know, our algorithm is the first to handle concurrent conditional effects. In experiments we show that our approach is competitive with previous work, and that it can solve concurrent multiagent planning problems that are out of reach of previous approaches.

Since the number of atomic actions is exponentially smaller than the number of joint actions, a distributed action definition has the potential to scale to much larger instances, which we demonstrate in our experiments. It is not always easy to determine beforehand how many joint actions are needed; in \textsc{Maze}, we may need $k$ agents to cross a bridge together, requiring joint actions for $2,3,\ldots,k$ agents.

In future work, we would like to explore strategies for optimizing the makespan, improving scalability and introducing the notion of {\em capability} \cite{ChouhanN17}. We also want to automatically derive the bounds of our algorithm. Furthermore, privacy preserving is a central topic on multiagent planning; thus, this approach could be combined with suitable privacy-preserving mechanisms in the future.

\section{Acknowledgments}
This work has been supported by the Maria de Maeztu Units
of Excellence Programme (MDM-2015-0502). Anders Jonsson is partially supported by the grants TIN2015-67959 and PCIN-2017-082 of the Spanish Ministry of Science.


\begin{thebibliography}{}
	
	\bibitem[\protect\citeauthoryear{Borrajo}{2013}]{Borrajo13}
	Borrajo, D.
	\newblock 2013.
	\newblock {Plan Sharing for Multi-Agent Planning}.
	\newblock In {\em DMAP 2013 - Proceedings of the Distributed and Multi-Agent
		Planning Workshop at ICAPS},  57--65.
	
	\bibitem[\protect\citeauthoryear{Boutilier and Brafman}{2001}]{BoutilierB01}
	Boutilier, C., and Brafman, R.~I.
	\newblock 2001.
	\newblock {Partial-Order Planning with Concurrent Interacting Actions}.
	\newblock {\em J. Artif. Intell. Res. {(JAIR)}} 14:105--136.
	
	\bibitem[\protect\citeauthoryear{Brafman and Domshlak}{2008}]{BrafmanD08}
	Brafman, R.~I., and Domshlak, C.
	\newblock 2008.
	\newblock {From One to Many: Planning for Loosely Coupled Multi-Agent Systems}.
	\newblock In {\em Proceedings of the Eighteenth International Conference on
		Automated Planning and Scheduling, {ICAPS} 2008, Sydney, Australia, September
		14-18, 2008},  28--35.
	
	\bibitem[\protect\citeauthoryear{Brafman and Zoran}{2014}]{BrafmanZoran14}
	Brafman, R.~I., and Zoran, U.
	\newblock 2014.
	\newblock {Distributed Heuristic Forward Search with Interacting Actions}.
	\newblock In {\em Proceedings of the 2nd ICAPS Distributed and Multi-Agent
		Planning workshop (ICAPS DMAP-2014)}.
	
	\bibitem[\protect\citeauthoryear{Chouhan and Niyogi}{2016}]{ChouhanN16}
	Chouhan, S.~S., and Niyogi, R.
	\newblock 2016.
	\newblock {Multi-agent Planning with Collaborative Actions}.
	\newblock In {\em {AI} 2016: Advances in Artificial Intelligence - 29th
		Australasian Joint Conference, Hobart, TAS, Australia, December 5-8, 2016,
		Proceedings},  609--620.
	
	\bibitem[\protect\citeauthoryear{Chouhan and Niyogi}{2017}]{ChouhanN17}
	Chouhan, S.~S., and Niyogi, R.
	\newblock 2017.
	\newblock {{MAPJA:} Multi-agent planning with joint actions}.
	\newblock {\em Appl. Intell.} 47(4):1044--1058.
	
	\bibitem[\protect\citeauthoryear{Crosby, Jonsson, and
		Rovatsos}{2014}]{CrosbyJR14}
	Crosby, M.; Jonsson, A.; and Rovatsos, M.
	\newblock 2014.
	\newblock {A Single-Agent Approach to Multiagent Planning}.
	\newblock In {\em {ECAI} 2014 - 21st European Conference on Artificial
		Intelligence, 18-22 August 2014, Prague, Czech Republic - Including
		Prestigious Applications of Intelligent Systems {(PAIS} 2014)},  237--242.
	
	\bibitem[\protect\citeauthoryear{Crosby}{2013}]{Cr13b}
	Crosby, M.
	\newblock 2013.
	\newblock {A Temporal Approach to Multiagent Planning with Concurrent Actions}.
	\newblock {\em PlanSIG}.
	
	\bibitem[\protect\citeauthoryear{Crosby}{2014}]{crosby2014multiagent}
	Crosby, M.
	\newblock 2014.
	\newblock {\em {Multiagent Classical Planning}}.
	\newblock Ph.D. Dissertation, University of Edinburgh.
	
	\bibitem[\protect\citeauthoryear{Fox and Long}{2003}]{Fox:PDDL21:JAIR2003}
	Fox, M., and Long, D.
	\newblock 2003.
	\newblock {PDDL2.1: An Extension to PDDL for Expressing Temporal Planning
		Domains}.
	\newblock {\em J. Artif. Int. Res.} 20(1):61--124.
	
	\bibitem[\protect\citeauthoryear{Helmert}{2006}]{Helmert06}
	Helmert, M.
	\newblock 2006.
	\newblock {The Fast Downward Planning System}.
	\newblock {\em J. Artif. Intell. Res. {(JAIR)}} 26:191--246.
	
	\bibitem[\protect\citeauthoryear{Jim{\'e}nez, Jonsson, and
		Palacios}{2015}]{conf/icaps/Jimenez15}
	Jim{\'e}nez, S.; Jonsson, A.; and Palacios, H.
	\newblock 2015.
	\newblock {Temporal Planning With Required Concurrency Using Classical
		Planning}.
	\newblock In {\em Proceedings of the 25th International Conference on Automated
		Planning and Scheduling (ICAPS'15)}.
	
	\bibitem[\protect\citeauthoryear{Jonsson and Rovatsos}{2011}]{JonssonR11}
	Jonsson, A., and Rovatsos, M.
	\newblock 2011.
	\newblock {Scaling Up Multiagent Planning: {A} Best-Response Approach}.
	\newblock In {\em Proceedings of the 21st International Conference on Automated
		Planning and Scheduling, {ICAPS} 2011, Freiburg, Germany June 11-16, 2011}.
	
	\bibitem[\protect\citeauthoryear{Kovacs}{2012}]{kovacs2012multi}
	Kovacs, D.~L.
	\newblock 2012.
	\newblock {A Multi-Agent Extension of PDDL3.1}.
	\newblock In {\em Proceedings of the 3rd Workshop on the International Planning
		Competition (IPC)},  19--27.
	
	\bibitem[\protect\citeauthoryear{Maliah, Brafman, and Shani}{2017}]{MaliahBS17}
	Maliah, S.; Brafman, R.~I.; and Shani, G.
	\newblock 2017.
	\newblock {Increased Privacy with Reduced Communication in Multi-Agent
		Planning}.
	\newblock In {\em Proceedings of the Twenty-Seventh International Conference on
		Automated Planning and Scheduling, {ICAPS} 2017, Pittsburgh, Pennsylvania,
		USA, June 18-23, 2017.},  209--217.
	
	\bibitem[\protect\citeauthoryear{Muise, Lipovetzky, and
		Ramirez}{2015}]{muise-codmap15}
	Muise, C.; Lipovetzky, N.; and Ramirez, M.
	\newblock 2015.
	\newblock {{MAP-LAPKT}: Omnipotent Multi-Agent Planning via Compilation to
		Classical Planning}.
	\newblock In {\em Competition of Distributed and Multiagent Planners}.
	
	\bibitem[\protect\citeauthoryear{Nissim and Brafman}{2014}]{NissimB14}
	Nissim, R., and Brafman, R.~I.
	\newblock 2014.
	\newblock {Distributed Heuristic Forward Search for Multi-agent Planning}.
	\newblock {\em J. Artif. Intell. Res.} 51:293--332.
	
	\bibitem[\protect\citeauthoryear{Richter and
		Westphal}{2010}]{richter:lama:JAIR2010}
	Richter, S., and Westphal, M.
	\newblock 2010.
	\newblock The {LAMA} {P}lanner: {G}uiding {C}ost-{B}ased {A}nytime {P}lanning
	with {L}andmarks.
	\newblock {\em Journal of Artificial Intelligence Research} 39:127--177.
	
	\bibitem[\protect\citeauthoryear{Shekhar and Brafman}{2018}]{ShekharB18}
	Shekhar, S., and Brafman, R.~I.
	\newblock 2018.
	\newblock {Representing and Planning with Interacting Actions and Privacy}.
	\newblock In {\em Proceedings of the Twenty-Eighth International Conference on
		Automated Planning and Scheduling, {ICAPS} 2018, Delft, The Netherlands, June
		24-29, 2018.},  232--240.
	
	\bibitem[\protect\citeauthoryear{Stolba, Fiser, and Komenda}{2016}]{StolbaFK16}
	Stolba, M.; Fiser, D.; and Komenda, A.
	\newblock 2016.
	\newblock {Potential Heuristics for Multi-Agent Planning}.
	\newblock In {\em Proceedings of the Twenty-Sixth International Conference on
		Automated Planning and Scheduling, {ICAPS} 2016, London, UK, June 12-17,
		2016.},  308--316.
	
	\bibitem[\protect\citeauthoryear{Torre{\~{n}}o, Onaindia, and
		Sapena}{2014}]{TorrenoOS14}
	Torre{\~{n}}o, A.; Onaindia, E.; and Sapena, O.
	\newblock 2014.
	\newblock {{FMAP:} Distributed cooperative multi-agent planning}.
	\newblock {\em Appl. Intell.} 41(2):606--626.
	
\end{thebibliography}


\end{document}